\documentclass[letterpaper, 10 pt, conference]{ieeeconf}
\IEEEoverridecommandlockouts
\overrideIEEEmargins
\pdfminorversion=4

\usepackage{times}
\usepackage{epsfig}
\usepackage{graphicx}
\usepackage{amsmath}
\usepackage{amssymb}

\usepackage{amsthm}
\usepackage{multicol}
\usepackage{multirow}
\usepackage{subcaption}
\usepackage{xfrac}

\DeclareMathOperator{\sgn}{sgn}
\newtheorem{claim}{Claim}

\title{\LARGE \bf
Learning an Uncertainty-Aware Object Detector for\\ Autonomous Driving
}

\author{Gregory P. Meyer$^{1}$ and Niranjan Thakurdesai$^{1,2}$%
\thanks{$^{1}$Uber Advanced Technologies Group}%
\thanks{$^{2}$Georgia Institute of Technology}%
\thanks{Correspondence to {\tt\small gmeyer@uber.com}}%
}

\begin{document}

\maketitle
\thispagestyle{empty}
\pagestyle{empty}

\begin{abstract}
The capability to detect objects is a core part of autonomous driving.
Due to sensor noise and incomplete data, perfectly detecting and localizing every object is infeasible.
Therefore, it is important for a detector to provide the amount of uncertainty in each prediction.
Providing the autonomous system with reliable uncertainties enables the vehicle to react differently based on the level of uncertainty.
Previous work has estimated the uncertainty in a detection by predicting a probability distribution over object bounding boxes.
In this work, we propose a method to improve the ability to learn the probability distribution by considering the potential noise in the ground-truth labeled data.
Our proposed approach improves not only the accuracy of the learned distribution but also the object detection performance.
\end{abstract}

\section{Introduction}
\label{sec:intro}
A crucial component of autonomous driving is the ability to detect and localize the surrounding objects.
To accomplish this task, autonomous vehicles are equipped with various sensors including cameras and LiDARs.
A wealth of deep learning based approaches have been proposed to perform 3D object detection using these sensors~\cite{liVehicleDetection3D2016a, chenMultiview3DObject2017, zhouVoxelNetEndtoEndLearning2018, kuJoint3DProposal2018, qiFrustumPointNets3D2018, beltranBirdNet3DObject2018, yangPIXORRealtime3D2018, liangDeepContinuousFusion2018, xuPointFusionDeepSensor2018, lasernet, lasernet++}.
Given the limited sensory information, it is unrealistic to expect any detector to flawlessly classify and localize every actor in all situations.
Therefore, it is important for a detector to provide to the autonomous system its level of uncertainty in its predictions and the uncertainties need to be reliable.

Previously proposed detectors provide the uncertainty for the object classification by predicting a categorical distribution over the classes of objects, but the majority of the previous work does not provide an uncertainty for the localization of the object.
Following~\cite{kendalluncertainities}, a few methods have been proposed to estimate the localization uncertainty by learning a probability distribution over object bounding boxes given the sensor data~\cite{lasernet, uncertainty_3d_detection_itsc, uncertainty_3d_detection}. 
These methods learn the parameters of the probability distribution by maximizing the likelihood of a ground-truth label.
In this work, we show that maximizing the likelihood, or equivalently minimizing the negative log likelihood, has undesirable properties for learning uncertainties, which could result in numerical instability and overfitting.
This is due to the likelihood being maximized when the distribution becomes a Dirac delta function with zero uncertainty and infinity probability density for a label.

Instead of assuming the labels are samples from a distribution and training the model to maximize their likelihood, we assume each label is itself a distribution, and we train the model by minimizing the Kullback-Leibler (KL) divergence between the predicted distribution and the label distribution.
We demonstrate that the KL divergence resolves the issues with the negative log likelihood and improves the performance of the model.

Assuming each label is a distribution implies the labels contain some amount of uncertainty, which is likely to be the case with human annotation.
As a result, we need a way to determine the amount of noise in each label, which is a non-trivial problem.
In this paper, we propose a heuristic to estimate the uncertainty in a label by considering both the data and how it is annotated.

With our proposed approach, we are able to improve the performance of a state-of-the-art object detector~\cite{lasernet} by only modifying the loss function used during training.
Our approach significantly improves the performance of rare objects, which we believe is the result of the KL divergence being less prone to overfitting potentially noisy labels.
Furthermore, we see an improvement in the accuracy of the predicted distribution.

In the following sections, we discuss the previous work related to estimating uncertainty with deep neural networks (Section~\ref{sec:related}), review an approach for learning the uncertainty in a detection and propose a modification that assumes the ground-truth labels are noisy (Section~\ref{sec:learning_uncertainty}), propose a way to approximate the uncertainty in a label (Section~\ref{sec:label_uncertainty}), and present experimental results for our proposed method on a large-scale autonomous driving dataset (Section~\ref{sec:experiments}).

\section{Related Work}
\label{sec:related}

\subsection{Predicting Uncertainty with Neural Networks}
Neural networks tend to make over-confident predictions and do not provide reliable estimates of uncertainty in their predictions until recently.
Bayesian modeling provides a theoretically grounded and practical framework for representing uncertainty in neural networks~\cite{mackayPracticalBayesianFramework1992, gal2016uncertainty}.
Applied to computer vision tasks, Kendall and Gal~\cite{kendalluncertainities} divided uncertainties in the Bayesian framework into two types, aleatoric and epistemic.
Aleatoric or data uncertainty is due to noise inherent in the data, so it cannot be reduced by increasing the size of the training set.
It arises from sensor noise, incomplete data, class ambiguity, and label noise~\cite{kendalluncertainities, malinin2018predictive}.
Aleatoric uncertainty is modeled by making the outputs of a neural network probabilistic, i.e. predicting a probability distribution instead of a point estimate.
Epistemic or model uncertainty is due to uncertainty in the model parameters.
It captures our ignorance about the model most suitable to explain our training data~\cite{gal2016uncertainty}.
High epistemic uncertainty means there may be another model which explains the data better.
It is important to model in the case of limited training data, but it can be reduced by collecting more training data~\cite{kendalluncertainities}.
Malinin and Gales~\cite{malinin2018predictive} modeled a third type of uncertainty called distributional uncertainty arising from a mismatch between the training and test distributions.
In this case, the model is unfamiliar with an example if it is out-of-distribution and hence, cannot make confident predictions.
They proposed a new framework called Prior Networks to explicitly predict distributional uncertainty and separate it from data uncertainty.
In this work, we focus on modeling data uncertainty since it cannot be reduced with a larger dataset.
It is important for an autonomous vehicle to understand the uncertainty in its detections due to limited sensor data so that it can plan accordingly, e.g. slow down the vehicle to collect more data.

The methods for predicting uncertainty in neural networks can be broadly divided into two categories, sampling-based and sampling-free.
One class of sampling-based approaches uses variational inference over the neural network weights~\cite{gravesPracticalVariationalInference2011a, blundellWeightUncertaintyNeural2015a}.
This involves approximating the posterior distribution of the network weights given the dataset, which is difficult to evaluate, with a more tractable distribution~\cite{gravesPracticalVariationalInference2011a, kendalluncertainities}.
Epistemic uncertainty is estimated by sampling this approximate distribution in order to draw a set of weights; each sample will result in a set of predictions which are combined to compute the epistemic uncertainty.
Gal and Ghahramani~\cite{galDropoutBayesianApproximation2016, galBayesianConvolutionalNeural2015} introduced a technique for modeling epistemic uncertainty called Monte Carlo dropout which uses multiple stochastic forward passes of the neural network with dropout~\cite{srivastava2014dropout}.
Dropout is a popular technique used during training as a form of regularization; however, with Monte Carlo dropout, it is used during inference as well to estimate epistemic uncertainty.
All of these methods rely on sampling the output several times which is computationally expensive; as a result, they are unsuitable for real-time autonomous systems.
On the other hand, sampling-free methods such as~\cite{kendalluncertainities, choiUncertaintyAwareLearningDemonstration2018} are computationally efficient and able to predict uncertainties in a single forward pass.

\subsection{Uncertainty Estimation in Object Detection}
\label{sec:uncertainty_od}
A variety of methods have been proposed for 3D object detection in the context of autonomous driving~\cite{liVehicleDetection3D2016a, chenMultiview3DObject2017, zhouVoxelNetEndtoEndLearning2018, kuJoint3DProposal2018, qiFrustumPointNets3D2018, beltranBirdNet3DObject2018, yangPIXORRealtime3D2018, liangDeepContinuousFusion2018, xuPointFusionDeepSensor2018, lasernet, lasernet++}.
Techniques for predicting localization uncertainty in object detection have been studied recently.
Jiang \textit{et al.}~\cite{uncertainty_2d_detection} predicted the intersection-over-union (IoU) between the 2D ground-truth and predicted bounding boxes as a measure of uncertainty.
Along the lines of~\cite{kendalluncertainities},~\cite{uncertainty_3d_detection_itsc} and~\cite{uncertainty_3d_detection} captured the epistemic and aleatoric uncertainties in 3D object detection using Monte Carlo dropout~\cite{galDropoutBayesianApproximation2016} and MAP inference respectively.
Furthermore, Meyer \textit{et al.}~\cite{lasernet} estimated aleatoric uncertainty in 3D object detection by predicting a multimodal distribution over bounding boxes.
In this work, we modify the way~\cite{lasernet} learns the probability distribution, resulting in better uncertainty estimates and detection performance.

\section{Proposed Method}
\label{sec:method}

In this work, we improve upon LaserNet~\cite{lasernet} which is a LiDAR-based probabilistic object detector.
LaserNet estimates the uncertainty in its detections by predicting a probability distribution over bounding boxes for each object.
In the following sections, we will review how the previous work learns a probability distribution and propose an alternative approach.

\subsection{Learning Uncertainty}
\label{sec:learning_uncertainty}

\begin{figure*}
  \centering
  \begin{subfigure}{0.32\textwidth}
    \centering
    \includegraphics[width=\linewidth]{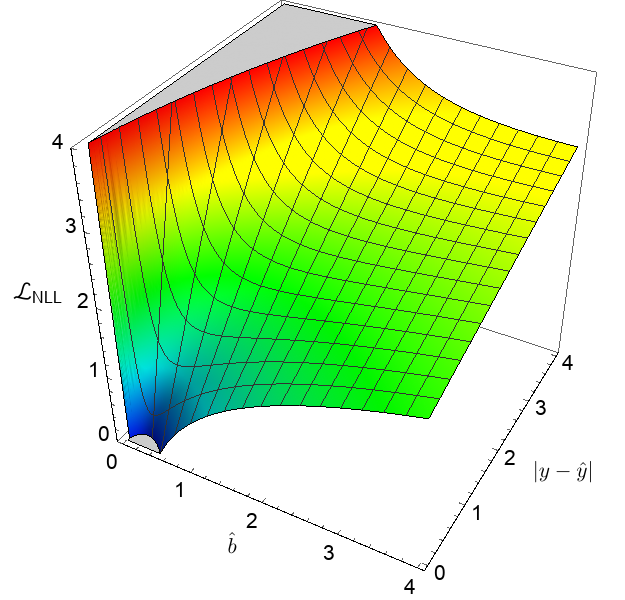}
    \caption{Negative Log Likelihood}
  \end{subfigure}
  \begin{subfigure}{0.32\textwidth}
    \centering
    \includegraphics[width=\linewidth]{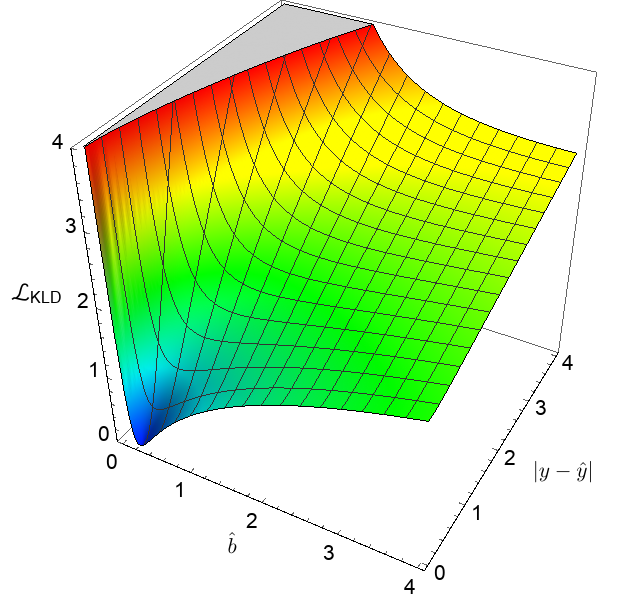}
    \caption{KL Divergence ($b=0.2$)}
  \end{subfigure}
  \begin{subfigure}{0.32\textwidth}
    \centering
    \includegraphics[width=\linewidth]{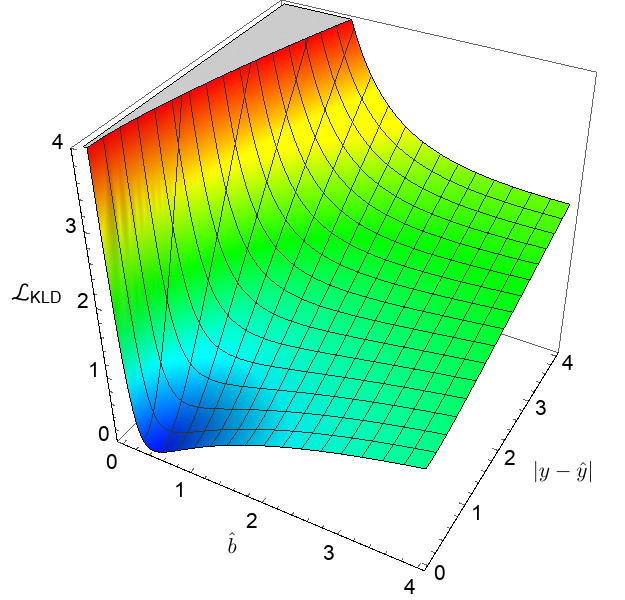}
    \caption{KL Divergence ($b=0.4$)}
  \end{subfigure}
  \caption{The negative log likelihood of a Laplace distribution and the KL divergence of Laplace distributions as a function of the prediction error, $|y - \hat{y}|$, and uncertainty, $\hat{b}$. When the error is large, the functions behave similarly. However, near their minimum, the negative log likelihood goes sharply to negative infinity, whereas the KL divergence goes to zero. The shape of the KL divergence is controlled by the label uncertainty, $b$.}
  \label{fig:losses}
\end{figure*}

To estimate the uncertainty in a detection, Meyer \textit{et al.}~\cite{lasernet} model the distribution of bounding box corners.
In~\cite{lasernet}, the corners are represented as a $D$-dimensional vector where $D$ is the number of corners multiplied by the dimensionality of each corner.
The dimensions are assumed to be drawn from independent univariate Laplace distributions; as a result, the probability density of a bounding box is defined as follows:
\begin{equation}
  \prod_{k=1}^D p(y_k|\hat{y}_k, \hat{b}_k) = \prod_{k=1}^D \frac{1}{2\hat{b}_k} \exp\left(-\frac{\left|y_k - \hat{y}_k\right|}{\hat{b}_k}\right)
\end{equation}
where $\hat{y}_k \in \mathbb{R}$ and $\hat{b}_k \in \mathbb{R}^+$ specify the mean and scale of the distribution along each dimension, respectively.
Given a LiDAR sweep denoted by $x$, LaserNet is trained to predict a distribution for each of the $M$ objects in the sweep,
\begin{equation}
  \left\{ \{\hat{y}_{1k}, \hat{b}_{1k}\}_{k=1}^D, \cdots, \{\hat{y}_{Mk}, \hat{b}_{Mk}\}_{k=1}^D \right\} = F_\theta(x)
\end{equation}
where $F_\theta$ represents a convolutional neural network parameterized by $\theta$, as well as, all pre- and post-processing~\cite{lasernet}. 
Following~\cite{kendalluncertainities}, the network is trained by minimizing the negative log likelihood over the training set,
\begin{equation}
  \hat{\theta} = \arg\min_\theta \sum_{i=1}^N \sum_{j=1}^M \sum_{k=1}^D \mathcal{L}_{NLL}(y_{ijk}, \hat{y}_{ijk}, \hat{b}_{ijk})
\end{equation}
where 
\begin{equation}
  \mathcal{L}_{NLL}(y, \hat{y}, \hat{b}) = \log2\hat{b} + \frac{\left|y - \hat{y}\right|}{\hat{b}}
  \label{eqn:nll}
\end{equation}
is the negative log likelihood of a Laplace distribution, $y_{ijk}$ is the ground-truth, $\hat{y}_{ijk}$ is the predicted mean, and $\hat{b}_{ijk}$ is predicted uncertainty for the $k$th dimension of the $j$th object bounding box in the $i$th LiDAR sweep.
The model is also trained to predict class probabilities for each object which we omit for brevity.
The partial derivatives of the negative log likelihood with respect to the predictions are
\begin{equation}
  \frac{\partial \mathcal{L}_{NLL}}{\partial \hat{y}} = -\frac{\sgn(y - \hat{y})}{\hat{b}}
\end{equation}
and
\begin{equation}
  \frac{\partial \mathcal{L}_{NLL}}{\partial \hat{b}} = \frac{1}{\hat{b}} \left(1 - \frac{\left|y - \hat{y}\right|}{\hat{b}}\right)\text{.}
\end{equation}
A problem with minimizing the negative log likelihood is that as $\left|y - \hat{y}\right| \rightarrow 0$,
\begin{equation}
  \frac{\partial \mathcal{L}_{NLL}}{\partial \hat{b}} \rightarrow \frac{1}{\hat{b}}\text{,}
\end{equation}
and the derivatives for both $\hat{y}$ and $\hat{b}$ explode as $\hat{b} \rightarrow 0$.
In practice, gradient clipping can be employed to prevent the model from diverging, but the model is still prone to overfitting since the magnitude of the gradient does not tend to zero as the prediction error goes to zero, $|y-\hat{y}| \rightarrow 0$.

The issue with the previous approach is that it assumes the labels are free of noise; therefore, \eqref{eqn:nll} is minimized only when the prediction has no error and uncertainty, i.e. $|y-\hat{y}|=0$ and $\hat{b}=0$.
Instead, let us assume each label contains some amount of noise and is itself a Laplace distribution.
As a result, we have the following distributions for each dimension of the bounding box:
\begin{equation}
  p(y_k^* | y_k, b_k) = \frac{1}{2b_k} \exp\left(-\frac{\left|y_k^* - y_k\right|}{b_k}\right)
\end{equation}
and
\begin{equation}
  q(y_k^* | \hat{y}_k, \hat{b}_k) = \frac{1}{2\hat{b}_k} \exp\left(-\frac{\left|y_k^* - \hat{y}_k\right|}{\hat{b}_k}\right)
\end{equation}
where $y_k^*$ is the unknown true position of the bounding box, $p(y_k^* | y_k, b_k)$ represents our uncertainty in the label, and $q(y_k^* | \hat{y}_k, \hat{b}_k)$ represents our uncertainty in the prediction.
Assuming we know the distribution of each label, then we can train the model by minimizing the Kullback-Leibler (KL) divergence between each of the label distributions and the predicted distributions,
\begin{equation}
  \hat{\theta} = \arg\min_\theta \sum_{i=1}^N \sum_{j=1}^M \sum_{k=1}^D \mathcal{L}_{KLD}(y_{ijk}, \hat{y}_{ijk}, b_{ijk}, \hat{b}_{ijk}) \\
\end{equation}
where
\begin{equation}
  \mathcal{L}_{KLD}(y, \hat{y}, b, \hat{b}) = \log\frac{\hat{b}}{b} + \frac{b \exp\left(-\frac{\left|y - \hat{y}\right|}{b} \right) + \left|y - \hat{y}\right|}{\hat{b}} - 1
  \label{eqn:kld}
\end{equation}
is the KL divergence of two Laplace distributions \cite{huber_loss}.
The partial derivatives of the KL divergence with respect to the predictions are
\begin{equation}
  \frac{\partial \mathcal{L}_{KLD}}{\partial \hat{y}} = -\frac{\sgn(y - \hat{y})}{\hat{b}} \left(1 - \exp\left(-\frac{\left|y - \hat{y}\right|}{b} \right) \right)
  \label{eqn:kld_y}
\end{equation}
and
\begin{equation}
  \frac{\partial \mathcal{L}_{KLD}}{\partial \hat{b}} = \frac{1}{\hat{b}} \left(1 - \frac{b \exp\left(-\frac{\left|y - \hat{y}\right|}{b} \right) + \left|y - \hat{y}\right|}{\hat{b}}\right)\text{.}
  \label{eqn:kld_b}
\end{equation}
As $\left|y - \hat{y}\right| \rightarrow 0$ and $b > 0$,
\begin{equation}
  \frac{\partial \mathcal{L}_{KLD}}{\partial \hat{y}} \rightarrow 0
\end{equation}
and
\begin{equation}
  \frac{\partial \mathcal{L}_{KLD}}{\partial \hat{b}} \rightarrow \frac{1}{\hat{b}} \left(1 - \frac{b}{\hat{b}}\right)
\end{equation}
since the exponential terms in \eqref{eqn:kld_y} and \eqref{eqn:kld_b} go to one.
As $\left|y - \hat{y}\right| \rightarrow 0$ and $\hat{b} \rightarrow b$, the derivatives for both $\hat{y}$ and $\hat{b}$ go to zero, which is a desirable property for a loss function.
However, if the noise in the label is set to zero, $b=0$, the exponential terms in \eqref{eqn:kld_y} and \eqref{eqn:kld_b} become zero regardless of the prediction error, and the derivatives of the KL divergence become identical to those of the negative log likelihood.

A comparison of the loss functions is shown in Fig.~\ref{fig:losses}.
The shape of the loss surface is similar for both the negative log likelihood and the KL divergence when the prediction error is large since the exponential terms in \eqref{eqn:kld_y} and \eqref{eqn:kld_b} go to zero as the error increases.
However, the behavior of the loss functions near the minimum differs significantly.
The negative log likelihood goes to negative infinity when the prediction error and uncertainty goes to zero, whereas the KL divergence becomes zero when the predicted distribution matches the label distribution.
Moreover, with the KL divergence, increasing the uncertainty of a label reduces the loss for that label as long as the prediction uncertainty does not underestimate the label uncertainty.
Refer to the Appendix for a proof.
This property of the KL divergence can be leveraged to prevent the model from overfitting to noisy labels.
Next, we discuss how we approximate the uncertainty of a label.

\begin{figure*}
  \centering
  \begin{subfigure}{0.3\textwidth}
    \centering
    \includegraphics[width=\linewidth]{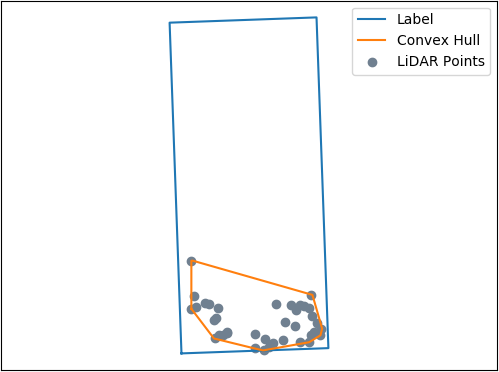}
    \caption{0.16 IoU}
  \end{subfigure}
  \begin{subfigure}{0.3\textwidth}
    \centering
    \includegraphics[width=\linewidth]{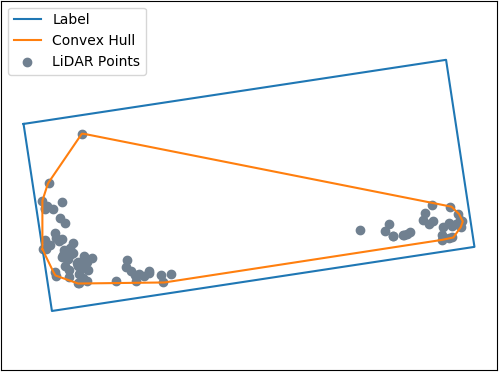}
    \caption{0.49 IoU}
  \end{subfigure}
  \begin{subfigure}{0.3\textwidth}
    \centering
    \includegraphics[width=\linewidth]{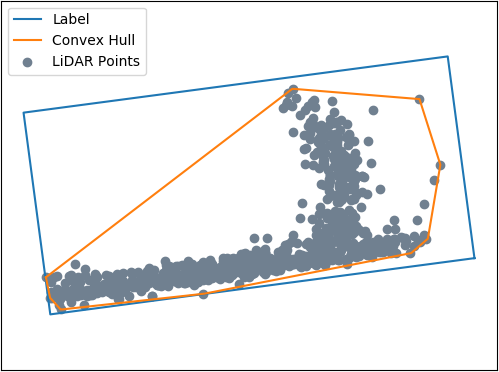}
    \caption{0.59 IoU}
  \end{subfigure} \\ \vspace{0.75em}
  \begin{subfigure}{0.3\textwidth}
    \centering
    \includegraphics[width=\linewidth]{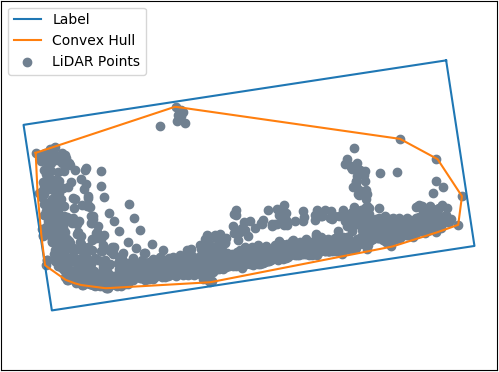}
    \caption{0.72 IoU}
  \end{subfigure}
  \begin{subfigure}{0.3\textwidth}
    \centering
    \includegraphics[width=\linewidth]{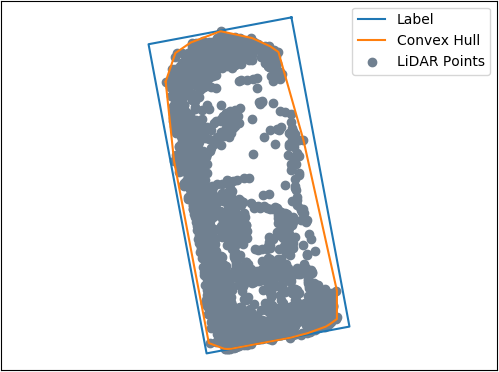}
    \caption{0.85 IoU}
  \end{subfigure}
  \begin{subfigure}{0.3\textwidth}
    \centering
    \includegraphics[width=\linewidth]{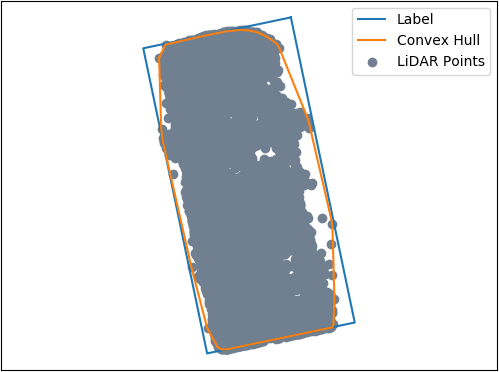}
    \caption{0.90 IoU}
  \end{subfigure}
  \caption{Examples of labeled vehicles that motivate the use of the IoU between the label bounding box and the convex hull of aggregated LiDAR points to estimate the noise in the annotation. We speculate that there is an inverse non-linear relationship between the uncertainty of a label and the IoU between the label and the convex hull.}
  \label{fig:ious}
\end{figure*}

\subsection{Estimating Label Uncertainty}
\label{sec:label_uncertainty}

To utilize KL divergence, we need to identify the distribution of every label.
One approach could be to have an annotator provide the distribution as he/she labels the data.
However, directly annotating the uncertainty of the label may be challenging and time-consuming.
Another possibility is to draw samples from the underlying label distribution by having multiple annotators provide labels for the same data, and then estimate the distribution from the samples.
Unfortunately, this process would be expensive and there is no guarantee that the label samples are independent and identically distributed (i.i.d.).
In this work, we explore ways to approximate the uncertainty in the labeled data.

The simplest approximation would be to assume a Laplace distribution with a constant uncertainty for every label, where the label itself defines the mean $y_{ijk}$ and a single scale $b_{ijk}$ is shared across all labels.
Depending on the scale we choose, we may significantly underestimate or overestimate the noise in a label.
To obtain a better approximation of the label uncertainty, we need to consider the data and how it is annotated.
In the case of object detection with LiDAR, annotators label objects within each LiDAR sweep and typically have access to previous and future sweeps.
For the dataset used in our experiments (Section~\ref{sec:experiments}), each LiDAR sweep is transformed from the sensor's coordinate system into a global coordinate frame which accounts for the ego-motion of the autonomous vehicle.
Labels are provided as oriented rectangles in the x-y plane (bird's eye view) of the global coordinate frame.
To estimate the noise in a label, we start by accumulating all the LiDAR points inside the label across all the sweeps where the label is visible.
To account for motion of the object across sweeps, we assume the object is rigid and compute its translation and rotation from the current sweep to an arbitrary reference sweep.
For example, let $p_{ij}$ be a point inside the $i$th label and captured during the $j$th sweep.
To transform $p_{ij}$ from the $j$th to the $k$th sweep, we perform the following operation:
\begin{equation}
  p^k_{ij} = \mathbf{R}_z(\theta_{ik} - \theta_{ij}) \left[p_{ij} - c_{ij}\right] + c_{ik}
\end{equation}
where $c_{ij}$ and $c_{ik}$ are the centers of the label in the $j$th and $k$th sweeps, $\theta_{ij}$ and $\theta_{ik}$ are the orientations of the label in their respective sweeps, and $\mathbf{R}_z(\theta)$ is a rotation matrix about the $z$-axis parameterized by $\theta$.
Afterwards, we determine the convex hull of the accumulated points and compute the intersection-over-union (IoU) between the convex hull and the label.
We speculate that the higher the IoU, the smaller the ambiguity in the label, as illustrated in Fig.~\ref{fig:ious}.
For each label, we use an exponential function of the form
\begin{equation}
  y = \alpha \exp(-\beta \cdot x) + \gamma
  \label{eqn:mapping}
\end{equation}
to map from the IoU between the label and the convex hull to the uncertainty of the label, where $\alpha$, $\beta$, and $\gamma$ are hyper-parameters to control the mapping.
In the next section, we evaluate the effect of KL divergence on the predicted distribution where our heuristic is used to estimate the scale of the label distribution.

\begin{table*}[t]
  \centering
  \caption{Loss Function and Label Uncertainty Experimental Results}
    \begin{tabular}{c|ccc||ccc|c}
      \hline
      \multirow{2}{*}{Loss Function} & \multicolumn{3}{c||}{Label Uncertainty (m)} & \multicolumn{3}{c|}{Average Precision (\%)} & Mean Average \\
      & Vehicle & Bike & Pedestrian & Vehicle & Bike & Pedestrian & Precision (\%) \\
      \hline
      $\mathcal{L}_{NLL}$ & 0.00 & 0.00 & 0.00 & 85.34 & 61.93 & 80.37 & 75.88 \\
      \hline
      $\mathcal{L}_{KLD}$ & 0.50 & 0.50 & 0.50 & 69.64 & 55.35 & 76.73 & 67.24 \\
      $\mathcal{L}_{KLD}$ & 0.25 & 0.25 & 0.25 & 80.23 & 60.10 & 80.66 & 73.66 \\
      $\mathcal{L}_{KLD}$ & 0.10 & 0.10 & 0.10 & 85.20 & 63.32 & 82.08 & 76.87 \\
      $\mathcal{L}_{KLD}$ & 0.05 & 0.05 & 0.05 & 85.91 & 62.81 & 81.90 & 76.87 \\
      $\mathcal{L}_{KLD}$ & 0.01 & 0.01 & 0.01 & 85.48 & 63.78 & 82.37 & 77.21 \\
      \hline
      $\mathcal{L}_{KLD}$ & $1.00 \rightarrow 0.05 \rightarrow 0.01$ & $1.00 \rightarrow 0.05 \rightarrow 0.01$ & $1.00 \rightarrow 0.05 \rightarrow 0.01$ & \textbf{86.05} & 63.71 & 82.02 & 77.26 \\
      $\mathcal{L}_{KLD}$ & $0.50 \rightarrow 0.05 \rightarrow 0.01$ & $0.25 \rightarrow 0.05 \rightarrow 0.01$ & $0.10 \rightarrow 0.05 \rightarrow 0.01$ & 86.04 & 64.29 & 82.43 & 77.59 \\
      $\mathcal{L}_{KLD}$ & $1.00 \rightarrow 0.05 \rightarrow 0.01$ & $0.50 \rightarrow 0.05 \rightarrow 0.01$ & $0.25 \rightarrow 0.05 \rightarrow 0.01$ & 85.87 & 64.58 & 82.41 & 77.62 \\
      $\mathcal{L}_{KLD}$ & $2.00 \rightarrow 0.05 \rightarrow 0.01$ & $1.00 \rightarrow 0.05 \rightarrow 0.01$ & $0.50 \rightarrow 0.05 \rightarrow 0.01$ & 85.74 & \textbf{64.82} & \textbf{82.61} & \textbf{77.72} \\
      \hline
    \end{tabular}
  \label{tab:ablation}
\end{table*}

\section{Experiments}
\label{sec:experiments}

Our proposed method is evaluated on the ATG4D dataset which contains 5,000 sequences for training and 500 for validation.
The training sequences are sampled at 10 Hz, and the validation sequences are sampled at 0.5 Hz.
The training set contains 1.2 million sweeps, and the validation set has 5,969 sweeps.
A Velodyne 64E LiDAR was used to capture all of the sweeps in the dataset.

For all of our experiments, we use LaserNet~\cite{lasernet} as our base detector and only modify the loss function used to learn the probability distribution over bounding boxes (Equation (9) in~\cite{lasernet}).
We change the loss function from the negative log likelihood of a Laplace distribution to the KL divergence of Laplace distributions.
All other settings remain unchanged; please refer to~\cite{lasernet} for details.

We experimented with two ways to approximate the label uncertainty: a constant uncertainty for all labels and a heuristically obtained label uncertainty.
In either case, the label distribution is assumed to be a Laplace distribution, and the mean of the distribution, $y_{ijk}$, is defined by the label.
When using a fixed uncertainty, a single scale parameter, $b_{ijk}$, is used for all labels.
We performed a variety of experiments with different values for the scale.
When utilizing the heuristic, we estimate the noise in each label.
For each label in the training set, we compute the IoU between the label and the convex hull of the LiDAR points observed within the label using the approach described in Section~\ref{sec:label_uncertainty}.
A histogram of IoU values is depicted in Fig.~\ref{fig:histogram} and shows there is a diverse set of labeled data in the dataset.
Equation \eqref{eqn:mapping} is used to map from the IoU to $b_{ijk}$ for each label, and we experimented with different values for $\alpha$, $\beta$, and $\gamma$.

\begin{figure}[t]
    \centering
    \includegraphics[width=0.4\textwidth]{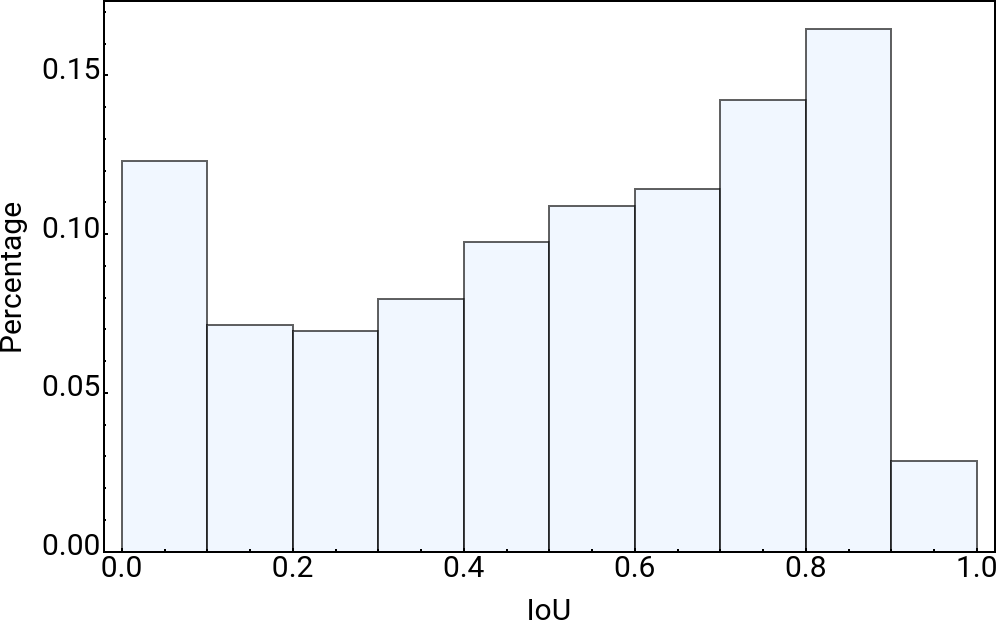}
    \hspace{0.5em}
    \caption{The distribution of IoU values in the ATG4D dataset.}
    \label{fig:histogram}
\end{figure}

\subsection{Detection Evaluation}
Following the previous work, we evaluate detections within the front $90^\circ$ field of view of the LiDAR and up to 70 meters away from the sensor.
The average precision (AP) metric is used to measure detection performance.
To be considered a true positive, a vehicle detection needs to achieve an IoU of 0.7 with a ground-truth bounding box, and for bike and pedestrian detections, an IoU of 0.5 is required.

Detection performance for the various experiments are presented in Table~\ref{tab:ablation}.
The values of $b_{ijk}$ for each experiment are listed in the table under ``Label Uncertainty.''
When a constant uncertainty is used for all labels, only a single value is specified for each class.
When using the heuristic to estimate the uncertainty for each label, three values are shown with the notation $b_{0.0} \rightarrow b_{0.5} \rightarrow b_{1.0}$ which correspond to the value of $b_{ijk}$ when the IoU between the label and the convex hull is zero, one-half, and one, respectively.
These values are used to determine the parameters $\alpha$, $\beta$, and $\gamma$.

By replacing the negative log likelihood with the KL divergence, we obtain an improvement in performance, as long as, the label uncertainty is not overestimated (row two and three in Table~\ref{tab:ablation}).
By approximating the label uncertainty with our proposed heuristic, we see a considerable improvement in the rarest class, the bike class.
We believe the gain in performance is due to our method reducing the amount of overfitting to potentially noisy labels, specifically, vehicle labels which are by far the most commonly occurring class.

A comparison between our proposed method (last row in Table~\ref{tab:ablation}) and recent state-of-the-art object detectors is shown in Table~\ref{tab:results}.
Our approach outperforms the previous methods that only utilize LiDAR data.
Furthermore, by only changing the loss function, we observe a similar gain in performance as adding an additional sensing modality (LaserNet++~\cite{lasernet++}).

\begin{table}[t]
  \centering
  \caption{Comparisons to State-of-the-Art Methods}
    \begin{tabular}{c|c||ccc}
      \hline
      \multirow{2}{*}{Method} & \multirow{2}{*}{Input} & \multicolumn{3}{c}{Average Precision (\%)} \\
      & & Vehicle & Bike & Pedestrian \\
      \hline
      Proposed Method & LiDAR & 85.74 & 64.82 & 82.61 \\
      \hline
      PIXOR~\cite{yangPIXORRealtime3D2018} & LiDAR & 80.99 & - & - \\
      PIXOR++~\cite{hdnet} & LiDAR & 82.63 & - & - \\
      ContFuse~\cite{liangDeepContinuousFusion2018} & LiDAR & 83.13 & 57.27 & 73.51 \\
      LaserNet~\cite{lasernet} & LiDAR  & 85.34 & 61.93 & 80.37 \\
      \hline
      ContFuse~\cite{liangDeepContinuousFusion2018} & LiDAR+RGB & 85.17 & 61.13 & 76.84 \\
      LaserNet++~\cite{lasernet++} & LiDAR+RGB & \textbf{86.23} & \textbf{65.68} & \textbf{83.42} \\
      \hline
    \end{tabular}
  \label{tab:results}
\end{table}

\begin{figure*}
  \begin{subfigure}{.5\textwidth}
    \centering
    \includegraphics[width=.82\linewidth]{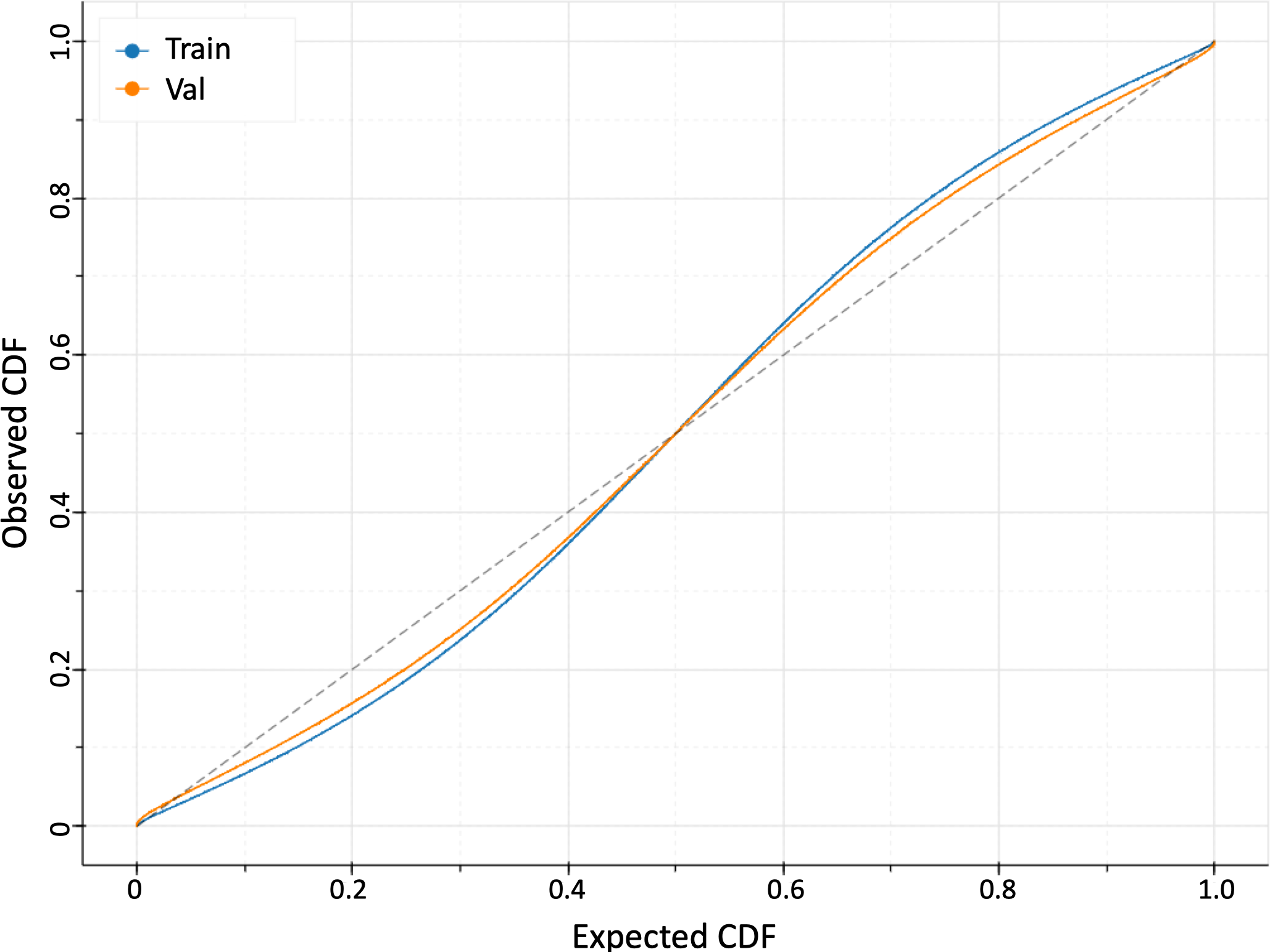}
    \caption{LaserNet~\cite{lasernet}}
  \end{subfigure}
  \begin{subfigure}{.5\textwidth}
    \centering
    \includegraphics[width=.82\linewidth]{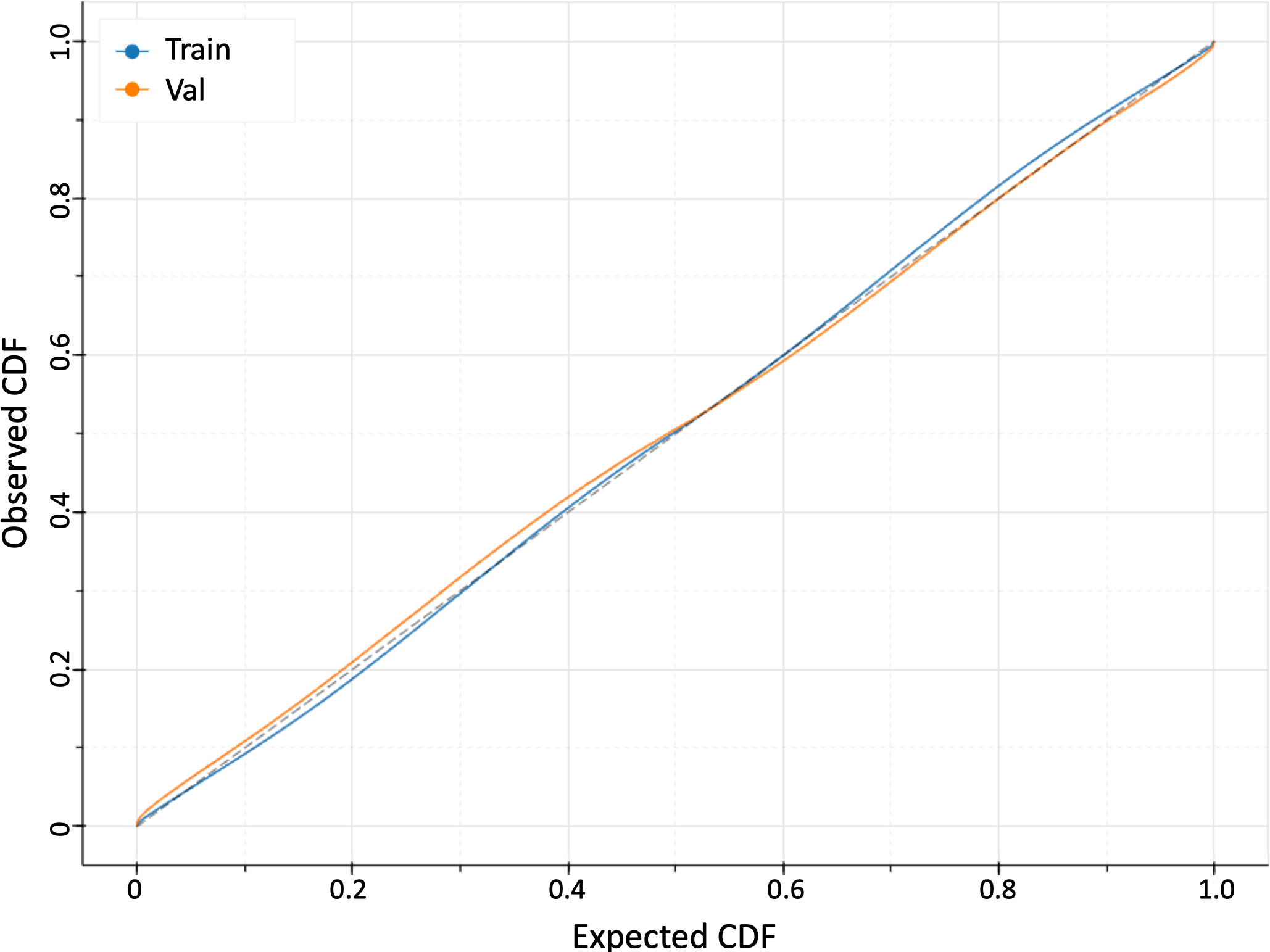}
    \caption{Proposed Method}
  \end{subfigure}
  \caption{Calibration plots showing the reliability of the predicted distribution from LaserNet~\cite{lasernet} and our proposed method. A perfectly calibrated model would follow the dashed line (the observed CDF matches the expected CDF at all probabilities).}
  \label{fig:variance}
\end{figure*}

\subsection{Uncertainty Evaluation}
To evaluate the predicted probability distribution, we compare the expected cumulative distribution function (CDF) to the observed CDF.
For each ground-truth label from a particular class, we calculate its standard score given the parameters of the predicted distribution.
A cumulative histogram is created from the standard scores, and it is compared to the CDF of a standard Laplace distribution.
The resulting plots for the vehicle class are shown in Fig.~\ref{fig:variance}.
Compared to~\cite{lasernet}, our proposed approach is capable of learning a better calibrated probability distribution.

\section{Conclusion}
\label{sec:conclusion}

In this work, we presented an approach to learn the uncertainty of a detection given the sensor data while being aware of the uncertainty in the labeled data.
Our proposed method improves a state-of-the-art probabilistic object detector~\cite{lasernet} in terms of both object detection performance and the accuracy of the learned probability distribution.
The sole difference is that~\cite{lasernet} makes the implicit assumption that labels are noise-free and uses the negative log likelihood to learn the distribution, whereas our proposed method uses a heuristic to estimate the uncertainty in a label and learns the distribution by minimizing the KL divergence.
We believe the improvement is due to the KL divergence being more well-behaved during training than the negative log likelihood.

\section*{Appendix}

In this section, we provide a proof for the claim made in Section \ref{sec:learning_uncertainty}.
\begin{claim}
  The following inequality holds for all $y, \hat{y} \in \mathbb{R}$:
  \begin{align}
    \begin{split}
      \log\frac{\hat{b}}{b_1} + \frac{b_1 \exp\left(-\frac{\left|y - \hat{y}\right|}{b_1} \right) + \left|y - \hat{y}\right|}{\hat{b}} - 1 > \\
      \log\frac{\hat{b}}{b_2} + \frac{b_2 \exp\left(-\frac{\left|y - \hat{y}\right|}{b_2} \right) + \left|y - \hat{y}\right|}{\hat{b}} - 1
    \end{split}
  \end{align}
  when $\hat{b} \geq b_2 > b_1 > 0$.
\end{claim}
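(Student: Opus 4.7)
The plan is to treat the right-hand side of the KL expression as a function of the label scale alone and show it is strictly decreasing in $b$ on the interval $(0, \hat b\,]$. Let $r = |y-\hat y| \geq 0$ and define
\begin{equation}
  f(b) = \log\frac{\hat b}{b} + \frac{b\,\exp(-r/b) + r}{\hat b} - 1.
\end{equation}
The claim is equivalent to $f(b_1) > f(b_2)$ whenever $0 < b_1 < b_2 \leq \hat b$. I would prove this by differentiating $f$ in $b$, bounding the derivative above by zero, and then integrating.

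First I would compute
\begin{equation}
  f'(b) = -\frac{1}{b} + \frac{1}{\hat b}\,\frac{d}{db}\!\left[b\,\exp(-r/b)\right]
        = -\frac{1}{b} + \frac{\exp(-r/b)\,(1 + r/b)}{\hat b},
\end{equation}
using the product and chain rules. Because $b \leq \hat b$, we have $1/\hat b \leq 1/b$, so
\begin{equation}
  f'(b) \;\leq\; \frac{1}{b}\bigl[\exp(-r/b)(1 + r/b) - 1\bigr].
\end{equation}
The key auxiliary step is then the elementary inequality $g(u) := e^{-u}(1+u) \leq 1$ for all $u \geq 0$, with equality only at $u=0$; this follows immediately from $g'(u) = -u\,e^{-u} \leq 0$ and $g(0)=1$. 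Applying it with $u = r/b \geq 0$ yields $f'(b) \leq 0$ on $(0,\hat b\,]$.

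To upgrade to strict monotonicity I would argue by cases. If $r>0$, then $u = r/b > 0$ so $g(u) < 1$ strictly, and $f'(b) < 0$ throughout $(0,\hat b\,]$. If $r = 0$, then $f$ collapses to $\log(\hat b/b) + b/\hat b - 1$ with derivative $1/\hat b - 1/b$, which is strictly negative for $b < \hat b$ and vanishes only at $b = \hat b$. In either case $f' < 0$ on $(0,\hat b)$, so $f$ is strictly decreasing on $(0,\hat b\,]$ and the inequality $f(b_1) > f(b_2)$ follows from the fundamental theorem of calculus.

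The only mildly subtle point, and the one I would be most careful about, is the strictness in the boundary case $r=0$ and $b_2 = \hat b$, where $f'(b_2)=0$; here one still wins because $f' < 0$ on the open subinterval $(b_1, b_2)$, so the integral $f(b_2)-f(b_1) = \int_{b_1}^{b_2} f'(b)\,db$ is strictly negative. Everything else is routine calculus once the auxiliary bound $e^{-u}(1+u)\leq 1$ is isolated.
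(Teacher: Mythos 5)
Your proof is correct, but it takes a genuinely different route from the paper's. The paper fixes the two label scales $b_1, b_2$ and studies the difference of the two sides as a function of the error $x = |y-\hat y|$: it shows the difference $f_1(x) = \log\frac{b_2}{b_1} + \frac{b_1 e^{-x/b_1} - b_2 e^{-x/b_2}}{\hat b}$ is non-decreasing in $x$, which reduces everything to the boundary case $x=0$, and that case in turn requires the auxiliary inequality $\log\rho + \frac{1}{\rho} - 1 > 0$ for $\rho = b_2/b_1 > 1$ (proved by a second monotonicity argument). You instead fix the error and differentiate in the label scale $b$, showing the loss is strictly decreasing in $b$ on $(0,\hat b\,]$; your single auxiliary lemma is $e^{-u}(1+u) \leq 1$ for $u \geq 0$, and the derivative computation, the bound using $1/\hat b \leq 1/b$, and the case analysis for strictness (including the boundary case $r=0$, $b_2=\hat b$, where $f'(\hat b)=0$ but $f'<0$ on the open subinterval) are all sound. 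Your route is arguably the more direct formalization of the sentence being proved --- ``increasing the label uncertainty reduces the loss as long as $\hat b$ does not underestimate it'' is literally a monotonicity statement in $b$ --- and it makes visible exactly where the hypothesis $b \leq \hat b$ enters (without it the bound $1/\hat b \leq 1/b$ fails, and indeed the loss increases again for $b > \hat b$). The paper's decomposition buys a statement about a single pair $(b_1,b_2)$ that is uniform over all errors $x$, at the cost of a two-stage argument with a separate scalar lemma. Both proofs are complete; neither subsumes the other, and yours handles the strictness of the inequality somewhat more explicitly.
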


\begin{proof}
  The inequality is equivalent to
  \begin{equation}
    f_1(x) = \log\frac{b_2}{b_1} + \frac{b_1 \exp\left(-\frac{1}{b_1} x \right) - b_2 \exp\left(-\frac{1}{b_2} x\right)}{\hat{b}} > 0
  \end{equation}
  where $x = \left|y - \hat{y}\right| \geq 0$.
  To prove the inequality, we will utilize the mean value theorem; therefore, we need to show that $f_1(0) > 0$ and $f'_1(x) \geq 0$ when $x \geq 0$.
  The first derivative of $f_1(x)$ is
  \begin{equation}
    f'_1(x) = \frac{1}{\hat{b}}\left(\exp\left(-\frac{1}{b_2} x\right) - \exp\left(-\frac{1}{b_1} x\right) \right)\text{.}
  \end{equation}
  The function $\exp(-x)$ monotonically decreases; therefore, $\exp(-x_1) > \exp(-x_2)$ when $x_2 > x_1$.
  Since $\sfrac{1}{b_1} > \sfrac{1}{b_2}$ and $\hat{b} > 0$, $f'_1(x) \geq 0$ when $x \geq 0$.
  Next, we need to demonstrate that
  \begin{equation}
    f_1(0) = \log\frac{b_2}{b_1} + \frac{b_1 - b_2}{\hat{b}} > 0\text{.}
  \end{equation}
  Since $\hat{b} \geq b_2 > b_1 > 0$,
  \begin{equation}
    0 > \frac{b_1 - b_2}{\hat{b}} \geq \frac{b_1 - b_2}{b_2}\text{.}
  \end{equation}
  Therefore, if
  \begin{equation}
    \log\frac{b_2}{b_1} + \frac{b_1 - b_2}{b_2} > 0
  \end{equation}
  or equivalently
  \begin{equation}
    f_2(\rho) = \log\rho + \frac{1}{\rho} - 1 > 0
  \end{equation}
  where $\rho = \sfrac{b_2}{b_1} > 1$, then $f_1(0) > 0$.
  Again leveraging the mean value theorem, we can show that $f_2(\rho) > 0$ when $\rho > 1$.
  The first derivative of $f_2(\rho)$ is
  \begin{equation}
      f'_2(\rho) = \frac{1}{\rho} - \frac{1}{\rho^2}\text{.}
  \end{equation}
  When $\rho > 1$, $\rho^2 > \rho$ and $f'_2(\rho) > 0$.
  Since $f_2(1) = 0$, $f_2(\rho) > 0$ when $\rho > 1$.
  As a result, $f_1(0) > 0$ which completes the proof.
\end{proof}

\addtolength{\textheight}{-10cm}

\bibliographystyle{IEEEtran}
\bibliography{bibliography}

\end{document}